\relax
\documentclass[letterpaper]{article} 
\usepackage{aaai18}  
\usepackage{times}  
\usepackage{helvet}  
\usepackage{courier}  
\usepackage{url}  
\usepackage{graphicx}  

\usepackage[table]{xcolor}
\usepackage{amsmath}
\usepackage{amsfonts}
\usepackage{multirow}
\usepackage{amssymb}
\usepackage[noend]{algorithmic}
\usepackage{algorithm}

\usepackage{verbatim}

\newcommand{\cor}[1]{
\cellcolor{orange!40}
}

\newcommand{\cg}[1]{
\cellcolor{gray!40}
}

\newtheorem{definition}{Definition}
\newtheorem{example}{Example}

\newtheorem{lemma}{Lemma}
\newtheorem{theorem}{Theorem}
\newtheorem{corollary}{Corollary}
\newtheorem{theoremappendix}{Theorem}
\newenvironment{proof}{\par\noindent{\em Proof.}}{\hfill $\Box$\medskip}

\newcommand{\mc}{\mathcal}
\newcommand{\ie}{i.e., }

\newcommand{\etal}{et al.}

\frenchspacing  
\setlength{\pdfpagewidth}{8.5in}  
\setlength{\pdfpageheight}{11in}  
  \pdfinfo{
/Title (Asymmetric Action Abstractions for Multi-Unit Control in Adversarial Real-Time Games)
/Rubens O. Moraes and Levi H. S. Lelis}
\setcounter{secnumdepth}{0}  
 \begin{document}
%
\title{Asymmetric Action Abstractions \\ for Multi-Unit Control in Adversarial Real-Time Games}
\author{Rubens O. Moraes and Levi H. S. Lelis\\
	Departamento de Inform\'atica, 
        Universidade Federal de Vi\c{c}osa, Brazil \\
        \{rubens.moraes, levi.lelis\}@ufv.br
}
\maketitle

\begin{abstract}
Action abstractions restrict the number of legal actions available during search in multi-unit real-time adversarial games, thus allowing algorithms to focus their search on a set of promising actions. 
Optimal strategies derived from un-abstracted spaces are guaranteed to be no worse than optimal strategies derived from action-abstracted spaces. In practice, however, due to real-time constraints and the state space size, one is only able to derive good strategies in un-abstracted spaces in small-scale games. 
In this paper we introduce search algorithms that use an action   abstraction scheme we call asymmetric abstraction. Asymmetric abstractions retain the un-abstracted spaces' theoretical advantage over regularly abstracted spaces while still allowing the search algorithms to derive effective strategies, even in large-scale games. Empirical results on combat scenarios that arise in a real-time strategy game show that our search algorithms are able to substantially outperform state-of-the-art approaches. 
\end{abstract}

\section{Introduction}

In real-time strategy (RTS) games the player controls dozens of units to collect resources, build structures, and battle the opponent. 
RTS games are excellent testbeds for Artificial Intelligence methods because they offer fast-paced environments, where 
players act simultaneously, and the number of legal actions 
grows exponentially with the number of units the player controls. 
Also, 
the time allowed for planning is on the order of milliseconds. 
In this paper we focus on the combat scenarios that arise in RTS games. 
A simplified version of RTS combats in which the units cannot move was shown to be PSPACE-hard in general \cite{FurtakB10}.

A successful family of algorithms for controlling  combat units uses what we call 
action abstractions to reduce the number of legal actions available during the game. 
In RTS games, player actions are represented as a vector of unit moves, where each entry in the vector represents a move for a unit controlled by the player. Action abstractions reduce the number of legal actions a player can perform by reducing the number of legal moves each unit can perform.  

Churchill and Buro \shortcite{ChurchillB13} introduced a method for building action abstractions through 
scripts. A script $\bar{\sigma}$ is a function mapping a game state $s$ and a unit $u$ to a move $m$ for $u$. 
A set of scripts $\mc{P}$ induces an action abstraction by restricting the set of legal moves of all units to moves returned by the scripts in $\mc{P}$. We call an action abstraction created with Churchill and Buro's scheme a uniform abstraction. 

In theory, players searching in un-abstracted spaces are guaranteed to derive optimal strategies that are no worse than the optimal strategies derived from action-abstracted spaces. This is because the former has access to actions that are not available in action-abstracted spaces. 
Despite its theoretical disadvantage, 
uniform abstractions are successful in large-scale combats \cite{ChurchillB13}.  
This happens because the state space of RTS combats can be very large, and the problem's real-time constraints often allow search algorithms to explore only a small fraction of all legal actions before deciding on which action to perform next---uniform abstractions allow algorithms to focus their search on actions deemed as promising by 
the set of  scripts 
$\mc{P}$. 

In this paper we introduce search algorithms that use what we call 
asymmetric action abstractions (asymmetric abstractions for short) for multi-unit adversarial games. 
In contrast with uniform abstractions that restrict the number of  moves of all units, asymmetric abstractions restrict the number of  moves of only a subset of units. 
We show that asymmetric abstractions 
retain the un-abstracted spaces' theoretical advantage over uniformly abstracted ones while still allowing algorithms to derive effective strategies in practice, even in large-scale games. 
%
Another advantage of asymmetric abstractions 
is that they allow the search effort to be distributed unevenly amongst the units. This is important because
some units might benefit more from finer strategies (i.e., strategies computed while accounting for a larger set of moves) than others (e.g., in RTS games it is advantageous to provide finer control to units with low hit points so they survive longer). 

%
%
The algorithms we introduce for searching in asymmetrically abstracted spaces are based on Portfolio Greedy Search (PGS) \cite{ChurchillB13} and Stratified Strategy Selection (SSS) \cite{lelis2017}, two state-of-the-art approaches. 
Empirical results on RTS combats show that our algorithms 
are able to substantially outperform PGS and SSS. 


\section{Related Work}

Justesen \etal\ \shortcite{JustesenTTR14} 
proposed two variations of UCT \cite{Kocsis:2006} for searching in uniformly abstracted spaces: script-based and cluster-based UCT. Wang et al. \shortcite{WangCLHT16} introduced Portfolio Online Evolution (POE) a local search algorithm also designed for uniformly abstracted spaces. Wang et al. showed that POE is able to outperform Justesen's algorithms, and Lelis \shortcite{lelis2017} showed that PGS and SSS are able to outperform POE. 
Justesen \etal's and Wang \etal's algorithms can also be modified to search in asymmetrically abstracted spaces.
 We use PGS and SSS in this paper as they are the current state-of-the-art search-based algorithms for RTS combat scenarios \cite{lelis2017}.


Before the invention of action abstractions induced by scripts, state-of-the-art algorithms included search methods for un-abstracted spaces such as  
Monte Carlo \cite{ChungBS05,SailerBL07,Balla2009,Ontanon13} and Alpha-Beta \cite{ChurchillSB12}. Due to the large number of actions available during search, Alpha-Beta and Monte Carlo methods perform well only when controlling a small number of units. Some search algorithms cited are more general than 
the algorithms we consider in this paper, e.g., \cite{Ontanon13,OntanonB15}. This is because such algorithms can be used to control a playing agent throughout a complete RTS game. By contrast, the algorithms we consider in this paper are specialized for combat scenarios. 

Another line of research uses learning to control combat units in RTS games. 
Search algorithms need an efficient forward model of the game to plan. By contrast, learning approaches do not necessarily require such a model. 
Examples of learning approaches to unit control  
include the work by Usunier \etal\ \shortcite{UsunierSLC16} and Liu \etal\  \shortcite{LiuLB16}. Likely due to the use of an efficient forward model, search algorithms tend to scale more easily to large-scale combat scenarios than learning-based methods. While the former can effectively handle battles with more than 100 units, the latter are usually tested on battles with no more than 50 units. 


\section{Preliminaries}

Combat scenarios that arise in RTS games, which we also call \textbf{matches}, can be described as finite zero-sum two-player games with simultaneous and durative moves. We assume matches with deterministic actions in which all units are visible to both players.  
Matches can be defined by a tuple $(\mc{N}, \mc{S}, s_{init}, \mc{A}, \mc{R}, \mc{T})$, where, 

\begin{itemize}
\item $\mc{N} = \{i, -i\}$ is the set of \textbf{players} ($i$ is the player we control and $-i$ is our opponent).
\item  $\mc{S} = \mc{D} \cup \mc{F}$ is the set of \textbf{states}, where $\mc{D}$ denotes the set of \textbf{non-terminal states} and $\mc{F}$ the set of \textbf{terminal states}. Every state $s \in \mc{S}$ defines a grid map containing a joint set of \textbf{units} $\mc{U} = \mc{U}_i \cup \mc{U}_{-i}$, for players $i$ and $-i$. Every unit $u \in \mc{U}$ has properties such as $u$'s $x$ and $y$ coordinates on the map,  
attack range ($r(u)$), attack damage ($d(u)$), 
hit points ($hp(u)$), and weapon cool-down time, \ie the time the unit has to wait before repeating an attack action ($cd(u)$). 
%
$s_{init} \in \mc{D}$ is the start state and defines the initial position of the units $\mc{U}$ on the map.
\item $\mc{A} = \mc{A}_i \times \mc{A}_{-i}$ is the set of \textbf{joint actions}. 
$\mc{A}_i(s)$ is the set of legal \textbf{actions} player $i$ can perform at state $s$. 
Each action $a \in \mc{A}_{i}(s)$ is denoted by a vector of $n$ \textbf{unit moves} $(m_1, \cdots, m_n)$, where $m_k \in a$ is the move of the $k$-th \textbf{ready unit} of player $i$. A unit $u$ is not ready at $s$ if $u$ is busy performing a move. We denote the set of ready units of players $i$ and $-i$ as $\mc{U}^r_i$ and $\mc{U}^r_{-i}$. For $k \in \mathbb{N}^+$ we write $a[k]$ to denote the move of the $k$-th ready unit. Also, for unit $u$, we write $a[u]$ to denote the move of $u$ in $a$.

\item We denote the set of unit moves as $\mc{M}$, which includes moving up ($U$), left ($L$), right ($R$) and down ($D$), waiting ($W$), and attacking an enemy unit. The effect of moves $U, L, R, D$ is to change the unit $x$ and $y$ coordinates on the map; the effect of an attack move is the reduction of the target unit's $hp$ value by the $d$-value of the unit performing the attack. We write $\mc{M}(s, u)$ to denote the set of legal moves of unit $u$ at $s$. 
%
%
\item $\mc{R}_i : \mc{F} \rightarrow \mathbb{R}$ is a \textbf{utility function} 
with $\mc{R}_i(s) = -\mc{R}_{-i}(s)$, for any $s \in \mc{F}$. We use the LTD2 formula introduced by Kovarsky and Buro \shortcite{Kovarsky2005} as utility function. 
LTD2 evaluates a state $s$ with $\mc{U} = \mc{U}_i \cup \mc{U}_{-i}$ as follows. 
\begin{equation*}
\sum_{u \in \mc{U}_i} \sqrt{hp(u)} \cdot dpf(u) - \sum_{u \in \mc{U}_{-i}} \sqrt{hp(u)} \cdot dpf(u) \,.
\end{equation*}
Here, $dpf$ is the amount of damage $u$ can cause per frame of the game and is defined as $dpf(u) = d(u)/(cd(u) + 1)$ (we use $cd(u) + 1$ to ensure a valid operation if $cd = 0$).

\item The \textbf{transition function} $\mc{T} : \mc{S} \times \mc{A}_i \times \mc{A}_{-i} \rightarrow \mc{S}$ determines the sucessor state for a state $s$ and the set of joint actions taken at $s$.
\end{itemize}

A \textbf{decision point} of player $i$ is a state $s$ in which $i$ has at least one ready unit. In the framework we consider in this paper, a search algorithm is invoked at every decision point to decide on the player's next action. 

The \textbf{game tree} of a match is a tree rooted at $s_{init}$ whose nodes represent states in $\mc{S}$ and every edge represents a joint action in $\mc{A}$. 
For states $s_k, s_j \in \mc{S}$, there exists an outgoing edge from $s_k$ to $s_j$ if and only if there exists $a_i \in \mc{A}_i$ and $a_{-i} \in \mc{A}_{-i}$ such that $\mc{T}(s_k, a_i, a_{-i}) = s_j$. Nodes representing states in $\mc{F}$ are leaf nodes. We assume all matches to be finite, i.e., that the tree is bounded. 
We denote as $\Psi$ the \textbf{evaluation function} used by search algorithms while traversing the game tree. $\Psi$ receives as input a state $s$ and returns an estimate of the end-game value of $s$ for player $i$. 


%
A \textbf{player strategy} is a function $\sigma_i : \mc{S} \times \mc{A}_i \rightarrow [0, 1]$ for player $i$, which maps a state $s$ and an action $a$ to a probability value, indicating the chance of taking action $a$ at $s$.   
A strategy profile $\sigma = (\sigma_i, \sigma_{-i})$ defines the strategy of both players. 
%
%
The optimal \textbf{value of the game} rooted $s$ for player $i$ 
is denoted as $V_i(s)$ and can be computed by finding a Nash Equilibrium profile. 
Due to the problem's size and real-time constraints, 
it is impractical to find optimal profiles 
for most RTS combats. 
State-of-the-art approaches use abstractions to reduce the game tree size and then derive player strategies from the abstracted trees.

\section{Uniform Action Abstractions}

We define a \textbf{uniform abstraction} for player $i$ as a function mapping the set of legal actions $\mc{A}_i$ to a subset $\mc{A}'_i$ of $\mc{A}_i$. In RTS games, action abstractions are constructed from a collection of scripts. A \textbf{script} $\bar{\sigma}$ is a function mapping a state $s$ and a unit $u$ in $s$ to a legal move $m$ for $u$. 
A script $\bar{\sigma}$ can be used to define a player strategy $\sigma_i$ by applying $\bar{\sigma}$ to every  unit in the state. We write $\bar{\sigma}$ instead of $\bar{\sigma}(s, u)$ whenever the state and the unit are clear from the context.  

Let the \textbf{action-abstracted legal moves} of $u$ at state $s$ be the moves for $u$ that is returned by a script in $\mc{P}$, defined as, 
\begin{equation*}  
\mc{M}(s, u, \mc{P}) = \{\bar{\sigma}(s, u) | \bar{\sigma} \in \mc{P}\} \,.
\end{equation*}
\begin{definition} 
A uniform abstraction $\Phi$ is a function receiving as input 
 a state $s$, 
a player $i$, 
and a set of scripts $\mc{P}$. 
$\Phi$ returns a subset of $\mc{A}_i(s)$ denoted $\mc{A}'_i(s)$. $\mc{A}'_i(s)$ is defined by the Cartesian product of moves in $\mc{M}(s, u, \mc{P})$ for all $u$ in $\mc{U}^r_i$, where $\mc{U}^r_i$ is the set of ready units of $i$ in $s$. 
\end{definition}

Algorithms using a uniform abstraction search in a game tree for which player $i$'s legal actions are limited to $\mc{A}'_i(s)$ for all $s$. This way, algorithms focus their search on actions deemed as promising by the scripts in $\mc{P}$, as the actions in $\mc{A}'_i(s)$ are composed of moves returned by the scripts in $\mc{P}$. 

NOKAV and Kiter are scripts commonly used for inducing uniform abstractions \cite{ChurchillB13}. NOKAV assigns a move to $u$ so that $u$ does not cause more damage than that required to set an enemy's unit $hp$ to zero. Kiter allows $u$ to attack and then move away from its target. 



\begin{algorithm}[t]
\caption{Portfolio Greedy Search}
\label{alg:pgs}
\begin{algorithmic}[1]
\REQUIRE state $s$, available units $\mc{U}^r_i = \{u^i_1, \cdots, u^i_{n_i}\}$ and $\mc{U}^r_{-i} = \{u^{-i}_1, \cdots, u^{-i}_{n_{-i}}\}$ in $s$, unit strategies $\mc{P}$, time limit $t$, and evaluation function $\Psi$.
\ENSURE action $a$ for player $i$'s units.
\STATE $\bar{\sigma}_i \gets $ choose a script from $\mc{P}$ \textit{//see text for details} \label{haas:eval1}
\STATE $\bar{\sigma}_{-i} \gets $ choose a script from $\mc{P}$ \textit{//see text for details} \label{haas:eval2}
\STATE $a_i \gets \{\bar{\sigma}_i(u^i_1), \cdots, \bar{\sigma}_i(u^i_{n_i})\}$ 
\STATE $a_{-i} \gets \{\bar{\sigma}_{-i}(u^{-i}_{1}), \cdots, \bar{\sigma}_{-i}(u^{-i}_{n_{-i}})\}$ \label{haas:init_port} 
\WHILE{time elapsed is not larger than $t$}
\FOR{$k \gets 1$ to $|\mc{U}^r_i|$} \label{pgs:for_k}
		\FOR{each $\bar{\sigma} \in \mc{P}$}
			\STATE $a'_i \gets a_i$; $a'_i[k] \gets \bar{\sigma}(s, u^i_k)$ 
			\IF{$\Psi(\mc{T}(s, a'_i, a_{-i})) > \Psi(\mc{T}(s, a_i, a_{-i}))$} \label{haas:comp}
				\STATE $a_i \gets a'_{i}$ \label{pgs:change}
			\ENDIF
		\ENDFOR
		\IF{time elapsed is larger than $t$} 
			\STATE \textbf{return} $a_i$ \label{haas:return1}
		\ENDIF
\ENDFOR
\ENDWHILE
\STATE \textbf{return} $a_i$ \label{haas:return2}
\end{algorithmic}
\end{algorithm}

\subsection{Searching in Uniformly Abstracted Spaces}

Churchill and Buro \shortcite{ChurchillB13} introduced PGS, a method for searching in uniformly abstracted spaces. Algorithm \ref{alg:pgs} presents PGS, which receives as input a state $s$, 
player $i$'s and $-i$'s set of ready units for $s$ ($\mc{U}^r_i$ and $\mc{U}^r_{-i}$), 
a set of scripts $\mc{P}$, a time limit $t$, and an evaluation function $\Psi$. 
PGS returns an action $a$ for player $i$ to be executed in $s$. 
PGS selects the script $\bar{\sigma}_i$ (resp. $\bar{\sigma}_{-i}$) from $\mc{P}$ (lines \ref{haas:eval1} and \ref{haas:eval2}) that yields the largest $\Psi$-value assuming player $i$ executes an action composed of moves 
computed with $\bar{\sigma}_i$ for all units in $\mc{U}_i^r$ (resp. $\mc{U}_{-i}^r$), assuming $-i$ (resp. $i$) executes an action selected by the NOKAV script. 
Action vectors $a_i$ and $a_{-i}$ are initialized with the moves computed from $\bar{\sigma}_i$ and $\bar{\sigma}_{-i}$. 

Once $a_i$ and $a_{-i}$ have been initialized, PGS 
iterates through all units $u_k^i$ in $\mc{U}_i^r$ and tries to greedily  
improve the move assigned to $u_k^i$ in $a_i$, denoted by $a_i[k]$. 
Since PGS only assigns moves to units given by scripts in $\mc{P}$, it considers only actions in the space induced by a uniform abstraction. 
PGS evaluates $a_i$ 
for each possible move $\bar{\sigma}(s, u^i_k)$ for unit $u^i_k$. 
PGS keeps in $a_i$ the action found during search with the largest $\Psi$-value. This process is repeated until PGS reaches time limit $t$. PGS then returns $a_i$. 

The action $a_{-i}$ does not change after its initialization (see line \ref{haas:init_port}). Although in PGS's original formulation one 
%
alternates between improving player $i$'s and player ${-i}$'s actions \cite{ChurchillB13}, Churchill and Buro suggested to keep player ${-i}$'s action fixed after initialization as that leads to better results in practice. 

Lelis \shortcite{lelis2017} introduced Stratified Strategy Selection (SSS), a hill-climbing algorithm for uniformly abstracted spaces similar to PGS. The main difference between PGS and SSS is that the latter searches in the space 
induced by a partition of units called a type system. 
SSS assigns moves returned by the same script to units of the same type. For example, all units with low $hp$-value (type) move away from the battle 
(strategy encoded in a script). In terms of pseudocode, SSS initializes $\bar{\sigma}_i$ and $\bar{\sigma}_{-i}$ with the NOKAV script (lines \ref{haas:eval1} and \ref{haas:eval2}). Instead of iterating through all units as PGS does, SSS iterates through all types $q$ of units in line \ref{pgs:for_k} of Algorithm \ref{alg:pgs} and assigns the move provided by $\bar{\sigma}$ to all units of type $q$ before evaluating the resulting state with $\Psi$. 
SSS uses a meta-reasoning method to select the type system to be used. We call SSS what Lelis \shortcite{lelis2017} called SSS+.



\section{Asymmetric Action Abstractions}

Uniform abstractions are restrictive in the sense that all units have their legal moves reduced to those specified by scripts. In this section we introduce an abstraction scheme we call asymmetric  abstraction that is not as restrictive as uniform abstractions but still uses the guidance of the scripts for selecting a subset of promising actions.  
The key idea behind asymmetric abstractions is to   
 reduce the number of legal moves of only a subset of the units  controlled by player $i$; the sets of legal moves of the other units remain unchanged. We call  the subset of units that do not have their set of legal moves reduced the \textbf{unrestricted units}; the complement of the unrestricted units are referred as the \textbf{restricted units}. 

\begin{definition}
An asymmetric abstraction $\Omega$ is a function receiving as input 
 a state $s$, 
a player $i$, 
a set of unrestricted units $\mc{U}'_i \subseteq \mc{U}^r_i$, 
and a set of scripts $\mc{P}$. 
$\Omega$ returns a subset of actions of $\mc{A}_i(s)$, denoted $\mc{A}''_i(s)$, defined by the Cartesian product of the moves in $\mc{M}(s, u, \mc{P})$ for all $u$ in $\mc{U}^r_i \setminus \mc{U}'_i$ and of moves $\mc{M}(s, u')$ for all $u'$ in $\mc{U}'_i$. 
\end{definition} 

Algorithms using an asymmetric abstraction $\Omega$ search in a game tree for which player $i$'s legal actions are limited to $\mc{A}''_i(s)$ for all $s$. 
 %
If the set of unrestricted units is equal to the set of units controlled by the player, then 
the asymmetric abstraction is equivalent to the un-abstracted space, and if the set of unrestricted units is empty, the asymmetric abstraction is equivalent to the uniform abstraction induced by the same set of scripts. Asymmetric abstractions allow us to explore the action abstractions in the large spectrum of possibilities between the uniformly abstracted and un-abstracted spaces. 


The following theorem shows that an optimal strategy derived from the space induced by an asymmetric abstraction is at least as good as the optimal strategy derived from the space induced by a uniform abstraction as long as both abstractions are defined by the same set of scripts. 

\begin{theorem}
Let $\Phi$ be a uniform abstraction and $\Omega$ be an asymmetric abstraction, both defined with the same set of scripts $\mc{P}$. For a finite match with start state $s$, let $V_i^{\Phi}(s)$ be the optimal value of the game computed by considering the space induced by $\Phi$; define $V_i^{\Omega}(s)$ analogously. We have that 
$V_i^{\Omega}(s) \ge V_i^{\Phi}(s)$. 
\label{theorem:abs}
\end{theorem}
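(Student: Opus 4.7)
The plan is to argue set-theoretically: the action space $\Omega$ leaves available to player $i$ contains the action space left by $\Phi$, hence every player strategy compatible with $\Phi$ is also compatible with $\Omega$. Because in a zero-sum game the value is a max over player $i$'s strategies against a minimizing opponent, enlarging $i$'s strategy set can only weakly raise the value. So the proof reduces to establishing an inclusion of action spaces at every state and then invoking the standard minimax characterization of the abstracted game value.

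The first step is to show $\mc{A}'_i(s) \subseteq \mc{A}''_i(s)$ at every reachable state $s$. Both sets are defined as Cartesian products indexed by the ready units $\mc{U}^r_i$. For each restricted unit $u \in \mc{U}^r_i \setminus \mc{U}'_i$, the two abstractions contribute the same factor $\mc{M}(s, u, \mc{P})$, so these factors coincide. For each unrestricted unit $u' \in \mc{U}'_i$, $\Phi$ contributes $\mc{M}(s, u', \mc{P})$ whereas $\Omega$ contributes the full legal-move set $\mc{M}(s, u')$; since every script in $\mc{P}$ returns a legal move, $\mc{M}(s, u', \mc{P}) \subseteq \mc{M}(s, u')$, so $\Omega$'s factor contains $\Phi$'s. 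Because Cartesian product preserves inclusion factor by factor, the claim follows.

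To conclude, let $\Sigma_i^{\Phi}$ and $\Sigma_i^{\Omega}$ be the sets of player $i$ strategies that place all probability mass on $\mc{A}'_i(s)$ and $\mc{A}''_i(s)$ respectively at every decision point of the finite game tree rooted at $s$; the previous step yields $\Sigma_i^{\Phi} \subseteq \Sigma_i^{\Omega}$. Writing each abstracted value as the minimax value of a finite zero-sum game against a common opponent strategy class, and using that the $\max$ over a larger strategy set is at least the $\max$ over a smaller one, gives $V_i^{\Omega}(s) \ge V_i^{\Phi}(s)$. The main subtlety that most deserves explicit care is that both abstraction definitions are parametrized only by player $i$, so one must fix the opponent's strategy class to be identical in both minimax computations; otherwise granting the opponent extra options under $\Omega$ could, a priori, offset the benefit to $i$. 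With this framing pinned down and the Cartesian-product inclusion in hand, the argument is essentially routine.
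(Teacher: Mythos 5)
Your proposal is correct and follows essentially the same route as the paper: the factor-by-factor Cartesian-product inclusion $\mc{A}'_i(s) \subseteq \mc{A}''_i(s)$ is exactly the paper's Lemma~1, the induced strategy-set inclusion $\Sigma_i^{\Phi} \subseteq \Sigma_i^{\Omega}$ is its Corollary~1, and the conclusion is the same monotonicity-of-max argument with the opponent's strategy class held fixed (a point you rightly flag). The only difference is that the paper formalizes the final step by induction on the depth of the game tree, using the recursive max-min definition of the value at each state, whereas you argue in one shot over whole-game strategies; both are valid for a finite zero-sum game.
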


The proof for Theorem \ref{theorem:abs} (provided in the Appendix) hinges on the fact that a player searching with $\Omega$ has access to more actions than a player searching with $\Phi$. This guarantee can also be achieved by enlarging the set $\mc{P}$ used to induce $\Phi$. The problem of enlarging $\mc{P}$ is that new scripts might not be readily available as they need to be either handcrafted or learned. By contrast, one can easily create a wide range of asymmetric abstractions by modifying the set of unrestricted units. Also, depending on the combat scenario, 
some units might be more important than others 
and asymmetric abstractions allow one to assign finer strategies to these units. 
Similarly to what human players do, asymmetric abstractions allow algorithms to focus on a subset of units at a given time of the match. 
This is achieved by considering all legal moves of the unrestricted units during search. 

\subsection{Searching with Asymmetric Abstractions}
\label{sec:search} 

We introduce Greedy Alpha-Beta Search (GAB) and Stratified Alpha-Beta Search (SAB), two algorithms for searching in asymmetrically abstracted spaces. GAB and SAB hinge on a property of PGS and SSS that has hitherto been overlooked. Namely, both PGS and SSS may come to an early termination if they encounter a local maximum. PGS and SSS reach a local maximum when they complete all iterations of the outer for loop in Algorithm \ref{alg:pgs} (line \ref{pgs:for_k}) without altering $a_i$ (line \ref{pgs:change}). Once a local maximum is reached, PGS and SSS are unable to further improve the move assignments, even if the time limit $t$ was not reached.  

GAB and SAB take advantage of PGS's and SSS's early termination by operating in two steps.  
%
In the first step GAB and SAB search for an action in the uniformly abstracted space with PGS and SSS, respectively. The first step finishes either when (i) the time limit is reached or (ii) a local maximum is encountered. In the second step, which is run only if the first step finishes by encountering a local maximum, GAB and SAB fix the moves of all restricted units according to the moves found in the first step, and search in the asymmetrically abstracted space for moves for all 
unrestricted units. If the first step finishes by reaching the time limit, GAB and SAB return the action determined in the first step. GAB and SAB behave exactly like PGS and SSS in decision points in which the first step uses all time allowed for planning. We explain GAB and SAB in more detail below.

We also implemented a variant of PGS for searching in asymmetric spaces that is simpler than the algorithms we present in this section. In this PGS variant, during the hill-climbing search, for a given state $s$, instead of limiting the number of legal moves of all units $u$ to $\mc{M}(s, u, \mc{P})$, as PGS does, we consider all legal moves $\mc{M}(s, u)$ for unrestricted units, and the moves $\mc{M}(s, u, \mc{P})$ for restricted units. We call this PGS variant Greedy Asymmetric Search (GAS). 

\subsubsection{Greedy Alpha-Beta Search (GAB)}

In its first step GAB uses PGS to search in a uniformly abstracted space induced by $\mc{P}$ for deriving an action $a$ that is used to fix the moves of the restricted units during the second search. 
%
In its second step, GAB uses a variant of Alpha-Beta that accounts for durative moves \cite{ChurchillSB12} (ABCD). Although we use ABCD, one could also use other algorithms such as UCTCD \cite{ChurchillB13}. 
ABCD is used to search in a tree we call \textbf{Move-Fixed Tree} ($\mathit{MFT}$). The following example illustrates how the $\mathit{MFT}$ is defined; $\mathit{MFT}$'s definition follows the example. 

\begin{example}
Let $\mc{U}^r_i = \{u_1, u_2, u_3\}$ be $i$'s ready units in $s$, 
$\mc{P} = \{\bar{\sigma}_1, \bar{\sigma}_2\}$ be a set of scripts, and $\{u_1, u_3\}$ be the unrestricted units. 
 Also, let $a = (W, L, R)$ be the action returned by PGS while searching in the uniformly abstracted space induced by $\mc{P}$. 
GAB's second step searches in the $\mathit{MFT}$. 

The $\mathit{MFT}$ is rooted at $s$, and the set of abstracted legal actions in $s$ is obtained by fixing $a[u_2] = L$ and considering all legal moves of $u_1$ and $u_3$. That is, if $\mc{M}(s, u_1) = \{W, U\}$ and  $\mc{M}(s, u_3) = \{R, D\}$, then the set of abstracted legal actions in $s$ is: $\{(W, L, R), (W, L, D), (U, L, R), (U, L, D)\}$.
For all descendants states $s'$ of $s$ in the $\mathit{MFT}$, if 
$\mc{M}(s', u_1) = \{W, U\}$ and  $\mc{M}(s', u_3) = \{R, D\}$, then the set of abstracted legal actions in $s'$ is:
\begin{align*}
& \{(W, \bar{\sigma}_1(s', u_2), R),  (W, \bar{\sigma}_1(s', u_2), D), \\
& (U, \bar{\sigma}_1(s', u_2), R), (U, \bar{\sigma}_1(s', u_2), D)\} \,.
\end{align*}
 
\noindent
Here, $\bar{\sigma}_1 \in \mc{P}$ is what we call the default script of the $\mathit{MFT}$. 
\end{example}

\begin{definition}[Move-Fixed Tree]
For a given state $s$, a subset of unrestricted units of $\mc{U}_i$ in $s$, a set of scripts $\mc{P}$, a default script $\bar{\sigma} \in \mc{P}$, and an action $a$ returned by the search algorithm's first step, a Move-Fixed Tree ($\mathit{MFT}$) is a tree rooted at $s$ with the following properties.
\begin{enumerate}
\item The set of abstracted legal actions for player $i$ at the root $s$ of the $\mathit{MFT}$ is limited to actions $a'$ that have moves $a'[u]$ fixed to $a[u]$, for all restricted units $u$; 
\item The set of abstracted legal actions for player $i$ at states $s'$ descendants of $s$ is limited to actions $a'$ that have moves $a'[u]$ fixed to $\bar{\sigma}(s', u)$, for all 
restricted units $u$;  
\item The only abstracted legal action for player $-i$ at any state in the $\mathit{MFT}$ is defined by fixing the move returned by $\bar{\sigma}$ to all units in $\mc{U}_{-i}$. 
\end{enumerate}
\end{definition}

By searching in the $\mathit{MFT}$, ABCD searches for moves for the unrestricted units while the moves of all other units, including the opponent's units, are fixed. We fix the opponent's moves to the NOKAV (our default script) as was done in previous work \cite{ChurchillB13,WangCLHT16,lelis2017}. 
By fixing the opponent's moves to NOKAV we are computing a best response to NOKAV, and in theory, this could make our player exploitable. However, likely due to the real-time constraints, in practice one tends to derive more effective strategies by fixing the opponent to NOKAV, as mentioned in previous works \cite{lelis2017}. The development of action abstraction schemes different than using NOKAV for the opponent is an open research question. 
%

Let $s_1$ and $s_2$ 
be the states 
returned by the transition function $\mc{T}$ after applying action $a_1$ returned by GAB's first step (PGS) and action $a_2$ returned by GAB's second step (ABCD), respectively, from the state $s$ representing the game's current decision point. GAB returns 
$a_1$ 
if $\Psi(s_1) > \Psi(s_2)$, and 
$a_2$
otherwise. 



\subsubsection{Stratified Alpha-Beta Search (SAB)}

The difference between SAB and GAB is the search algorithm used in their first step: while GAB uses PGS, SAB uses SSS. 
The second step of SAB follows exactly the second step of GAB. 


\subsubsection{GAB and SAB for Uniform Abstractions}

For any state $s$, the value of $\Psi(\mc{T}(s, a_i, a_{-i}))$ for the action $a_i$ returned by PGS is a lower bound for the $\Psi$-value of the action returned by GAB. 
%
Similarly, SAB has the same guarantee over SSS. This is because the second step of GAB and SAB are performed only after a local maximum is reached. If the second step is unable to find an action with larger $\Psi$ than the first step, both GAB and SAB return the action encountered in the first step. 
%
We introduce variants of GAB and SAB called GAB$_{\mc{P}}$ and SAB$_{\mc{P}}$ that search in uniformly abstracted spaces to compare asymmetric with uniform abstractions. 

The difference between GAB and SAB and their variants GAB$_{\mc{P}}$ and SAB$_{\mc{P}}$ is that the latter 
only account for unit moves in $\mc{M}(s, u, \mc{P})$ for all  $s$ and $u$  
in their ABCD search. 
That is, in their second step search, GAB$_{\mc{P}}$ and SAB$_{\mc{P}}$ only consider actions $a'$ for which the moves $a'[u]$ for restricted units $u$ are fixed (as in GAB's and SAB's $\mathit{MFT}$) and the moves $a'[u']$ for unrestricted units $u'$ that are in $\mc{M}(s, u', \mc{P})$. 

GAB$_{\mc{P}}$ and SAB$_{\mc{P}}$ focus their search on a subset of units $\mc{U}'$ by searching deeper into the game tree with ABCD for $\mc{U}'$. 
In addition to searching deeper with ABCD, GAB and SAB focus their search on a subset of units $\mc{U}'$ by accounting for all legal moves of units in $\mc{U}'$ during search. 
If granted enough computation time, optimal algorithms using $\Omega$ derive stronger strategies than optimal algorithms using $\Phi$  
(Theorem \ref{theorem:abs}). 
In practice, due to the real-time constraints, algorithms are unable to compute optimal strategies for most of the decision points. We analyze empirically, by comparing GAB$_{\mc{P}}$ to GAB and SAB$_{\mc{P}}$ to SAB, which abstraction scheme allows one to derive stronger strategies. 


\subsection{Strategies for Selecting Unrestricted Units}

In this section we describe three strategies for selecting the unrestricted units. A selection strategy   
receives a state $s$ and a set size $N$ and returns a subset of size $N$ of player $i$'s units. The selection of unrestricted units is dynamic as the strategies can choose different unrestricted units at different states. 
Ties are broken randomly in our strategies.  

\begin{enumerate}
\item \textbf{More attack value (AV+)}. Let 
$av(u) = \frac{dpf(u)}{hp(u)}$. AV+ selects the $N$ units with the largest $av$-values, which allows search algorithms to  
provide finer control to units with low $hp$-value
and/or large $dpf$-value.
%
%
%
This strategy is expected to perform well as it might be able to preserve longer in the match the units which are about the be eliminated from the match and have good attack power. 
\item \textbf{Less attack value (AV-)}. AV- selects the $N$ units with the smallest $av$-values. We expect this strategy to be outperformed by AV+, 
as explained above.  
\item \textbf{Random (R)}. R randomly selects $N$ units at $s_{init}$ to be the unrestricted units. R replaces an unrestricted unit that has its $hp$-value reduced to zero by randomly selecting a restricted unit. This is a domain-independent strategy that in principle could be applied to other multi-unit domains. 
\end{enumerate}

\section{Empirical Methodology}

We use SparCraft\footnote{github.com/davechurchill/ualbertabot/tree/master/SparCraft} as our testbed, which is a simulation environment of Blizzard's StarCraft.   
In SparCraft the unit properties such as hit points are exactly the same as StarCraft. However, SparCraft does not implement fog of war, collisions, and unit acceleration \cite{ChurchillB13}. We use SparCraft because it offers an efficient forward model of the game, which is required by search-based methods. All experiments are run on 2.66 GHz CPUs.

\subsection{Combat Configurations}

We experiment with units with different $hp$, $d$, and $r$-values. We use $\uparrow$ to denote large and $\downarrow$ to denote small values. 
Also, we call $u$ a melee unit if $u$'s attack range equals zero ($r = 0$), and we call $u$ a ranged unit if $u$ is able to attack from far ($r > 0$). 
Namely, we use the following unit kinds: Zealots (Zl, $\uparrow$$hp$, $\uparrow$$d$, melee), Dragoons (Dg, $\uparrow$$hp$, $\uparrow$$d$, ranged), Zerglings (Lg, $\downarrow$$hp$, $\downarrow$$d$, melee), Marines (Mr, $\downarrow$$hp$, $\downarrow$$d$, ranged). 

We consider the combat scenarios where each player controls units of the following kinds: (i) Zl; (ii) Dg; (iii) Zl and Dg; (iv) Zl, Dg, and Lg; and (v) Zl, Dg, Lg, and Mr. 
We experiment with matches with as few as 6 units 
and as many as 56 units on each side. 
The largest number of units controlled by a player in a typical StarCraft combat is around 50 \cite{ChurchillB13}. The first two columns of Table \ref{tab:final_results} show the 20 combat configurations used in the experiments. The number of units is distributed equally amongst all kinds of units. For example, the scenario Zl+Dg+Lg+Mr with a total number of 56 units has 14 units of each kind. 


The units are placed in a walled arena with no obstacles of size 1280 $\times$ 780 pixels; the largest unit (Dragoon) is approximately 40 $\times$ 50 pixels large. The walls ensure finite matches by preventing units from indefinitely moving away from the enemy. 
Player $i$'s units are placed at a random coordinate to the right of the center of the arena (with distance varying from $0$ and $128$ pixels). 
Player $-i$'s units are placed at a symmetric position to the left of 
the center. 
Then, 
we add $220$ pixels to the $x$-coordinate of player $i$'s units, and subtract $220$ pixels from the $x$-coordinate player $-i$'s units, thus increasing the distance between enemy units by $440$ pixels. 
We use $\mc{P} = \{$NOKAV, Kiter$\}$ and 
 a time limit of 40 milliseconds for planning in all experiments. 
 
We use the $\Psi$ function described by Churchill \etal\ \shortcite{ChurchillSB12}. 
%
%
Instead of evaluating state $s$ directly with LTD2, 
our $\Psi$ 
simulates the game forward from $s$ for 100 state transition steps  until reaching a state $s'$; we then use the LTD2-value of $s'$ as the $\Psi$-value of $s$. The game is simulated from $s$ according to the NOKAV script for both players. 

\subsection{Testing Selection Strategies and Values of $N$}

\begin{table}[t]
\centering
\begin{small}
\begin{tabular}{c c c c c c c}
\hline
\multicolumn{7}{c}{GAB vs. PGS}                                                                                                                                                                                                                                                                                                                                                                                                                                                                                                                                   \\
\hline
\multirow{2}{*}{Strategy}& \multicolumn{5}{c}{Unrestricted Set Size $N$} & \multirow{2}{*}{Avg.} \\
\cline{2-6}
& 2 & 4 & 6 & 8 & 10 &                       \\ \hline \hline
AV+ & 0.88  &  0.92 & 0.89 & 0.87 & 0.86       & 0.88      \\
AV- & 0.69  & 0.76       & 0.78 &  0.82 &  0.82 & 0.77       \\
R   & 0.78  & 0.86       & 0.87 &  0.88   & 0.88 & 0.85                          
\\
\hline
\hline
\multicolumn{7}{c}{SAB vs. SSS}                                                                                                                                                                                                                                                                                                                                                                                                                                                                                                                                   \\
\hline
\multirow{2}{*}{Strategy}& \multicolumn{5}{c}{Unrestricted Set Size $N$} & \multirow{2}{*}{Avg.} \\
\cline{2-6}
& 2 & 4 & 6 & 8 & 10 &                       \\ \hline \hline
AV+ & 0.89  & 0.92 & 0.90 & 0.88       & 0.90  & 0.87      \\
AV- & 0.69  & 0.76       & 0.78 & 0.70       & 0.82 & 0.75  \\
R   & 0.75  & 0.80       & 0.83 & 0.84 & 0.85 & 0.81                                                                                       
\\
\hline
\end{tabular}
\caption{Winning rate of GAB against PGS and of SAB against SSS for different strategies and set sizes. 
}
\label{tab:avg_behavior}
\end{small}
\end{table}


First, we test different strategies for selecting unrestricted units 
as well as 
different values of $N$. 
We test GAB against PGS and SAB against SSS 
(the algorithms used in the first step of GAB and SAB) 
with AV+, AV-, and R, with $N$ varying from 1 to 10. Table \ref{tab:avg_behavior} shows the average winning rates of GAB  and SAB in 100 matches for each of the 20 combat configurations. 
Since the winning rate does not vary much with $N$, we show the winning rate of only even values of $N$. 
The ``Avg.'' column shows the average across 
all $N$ (1 to 10). 

Both GAB and SAB outperform 
their base algorithms  
for all selection strategies and $N$ values tested, even with the domain-independent R. 
The strategy that performs best is AV+, which 
obtains a winning rate of 0.92 
with $N$ of 4 for both GAB and SAB. The winning rate can 
vary considerably depending on the selection strategy for a fixed $N$. 
For example, for $N$ of 2, PGS and SAB with AV+ obtain a winning rate of 0.88 and 0.89, respectively, while they obtain a winning rate of only 0.69 with AV-. 
These results demonstrate the importance of 
carefully selecting the set of units for which the algorithm will focus its search on. 



Although GAB$_\mc{P}$ and SAB$_\mc{P}$ do not search in asymmetrically abstracted spaces, their performance also depends on the set of units controlled in the algorithms' ABCD search. Thus, we tested GAB$_\mc{P}$ and SAB$_\mc{P}$ with AV+, AV-, and R for selecting the units to be controlled in the algorithms' ABCD search. We also tested different number of units controlled in such searches: we tested set sizes from 1 to 10. Similar to the GAB and SAB experiments, we tested GAB$_\mc{P}$ against PGS and SAB$_\mc{P}$ against SSS; the detailed results are also omitted for space. The highest winning rate obtained by GAB$_\mc{P}$ against PGS was 0.74 while using the AV+ strategy to control 9 units in its ABCD search. The highest winning rate obtained by SAB$_\mc{P}$ against SSS+ was 0.78 while using the R strategy to control 9 units in its ABCD search. 

GAB and SAB tend to perform best while controlling a smaller set of units (4 units in our experiment) in their ABCD search than GAB$_{\mc{P}}$ and SAB$_{\mc{P}}$ (9 units). This is because GAB and SAB's ABCD search does not restrict the moves of the units, while GAB$_{\mc{P}}$ and SAB$_{\mc{P}}$'s ABCD search does. GAB$_{\mc{P}}$ and SAB$_{\mc{P}}$ are able to effectively search deeper for a larger set of units than GAB and SAB. On the other hand, GAB and SAB are able to encounter finer strategies to the unrestricted units. Next, we directly compare these approaches with a detailed empirical study. 

\subsection{Asymmetric versus Uniform Abstractions}

We test  GAB, GAB$_{\mc{P}}$ and PGS (G-Experiment); and SAB, SAB$_{\mc{P}}$, SSS (S-Experiment). 
GAB, GAB$_{\mc{P}}$, SAB, and SAB$_{\mc{P}}$ use the best performing unrestricted set size $N$ and selection strategies 
as described above. 

\begin{table}[t]
\centering
\setlength{\tabcolsep}{3.8 pt}
\begin{tabular}{p{0.1cm} p{0.2cm} c|c c c|c c c} 
\hline
 \multicolumn{ 3 }{c|}{\multirow{2}{*}{\#Units}}	 & 	 \footnotesize{GAB$_{\mc{P}}$} 	 & 	 \multicolumn{ 1 }{c}{\footnotesize{GAB}}  	 &  \footnotesize{GAB} & 	 \footnotesize{SAB$_{\mc{P}}$} 	 & 	 \multicolumn{ 1 }{c}{\footnotesize{SAB}}  	 &  \footnotesize{SAB} \\
& & & \footnotesize{PGS} & \footnotesize{PGS} & \footnotesize{GAB$_{\mc{P}}$} & \footnotesize{SSS} & \footnotesize{SSS} & \footnotesize{SAB$_{\mc{P}}$} \\
\hline
\hline
\multicolumn{ 2 }{r}{\multirow{4}{*}{\rotatebox[origin=c]{90}{ \textbf{Zl} }}} 
   & \textbf{(8)} 	 &  0.73 	 & 	0.72 	 & 	0.52 & \cg\	0.65 	 & \cg\	0.95  & \cg\ 0.93 \\
& & \textbf{(16)} 	 & 	0.78 	 & 	0.79 	 & 	0.57 & \cg\ 0.70 	 & \cg\	0.96  & \cg\ 0.94 \\
& & \textbf{(32)} 	 & 	0.77 	 & 	0.81 	 &  0.54 & \cg\	0.72 	 & \cg\	0.93  & \cg\ 0.81 \\
& & \textbf{(50)} 	 & 	0.80 	 & 	0.78 	 & 	0.50 & \cg\	0.69 	 & \cg\	0.90  & \cg\ 0.76 \\
\hline
\hline
\multicolumn{ 2 }{r}{\multirow{4}{*}{\rotatebox[origin=c]{90}{\textbf{Dg}}}} 
   & \textbf{(8)} 	 & 	0.69 	 & 	0.94 	 & 	0.88 & \cg\	0.60 	 & \cg\	0.91 	 & \cg\ 0.88 \\
&  & \textbf{(16)} 	 & 	0.71 	 & 	0.85 	 &  0.84 & \cg\	0.62 	 & \cg\	0.93 	 & \cg\ 0.88 \\
&  & \textbf{(32)} 	 & 	0.68     & 	0.81 	 & 	0.82 & \cg\	0.65 	 & \cg\	0.88 	 & \cg\ 0.81 \\
&  & \textbf{(50)} 	 & 	0.64 	 & 	0.78 	 & 	0.78 & \cg\	0.67 	 & \cg\	0.87 	 & \cg\ 0.79 \\
\hline
\hline
\multicolumn{ 2 }{r}{\multirow{4}{*}{\rotatebox[origin=c]{90}{\textbf{Zl+Dg}}}} 
   & \textbf{(8)}   & 	0.64 	 & 	0.76 	 & 	0.68 & \cg\	0.59 	 & \cg\	0.93 	 & \cg\	0.90 \\
&  & \textbf{(16)}   & 	0.66 	 & 	0.82	     & 	0.78 & \cg\	0.66 	 & \cg\	0.93     & \cg\	0.86 \\
&  & \textbf{(32)} & 	0.66     & 	0.79 	 & 	0.79 & \cg\	0.64     & \cg\	0.91 	 & \cg\	0.81 \\
&  & \textbf{(50)} & 	0.65 	 & 	0.74 	 & 	0.71 & \cg\	0.63 	 & \cg\	0.90 	 & \cg\	0.77 \\
\hline
\hline
\multirow{4}{*}{\rotatebox[origin=c]{90}{\textbf{Zl}+\textbf{Dg}}} & \multirow{4}{*}{\rotatebox[origin=c]{90}{\textbf{+Lg}}} 
   & \textbf{(6)}    & 	0.58 	 & 	0.94	     & 	0.91 & \cg\	0.59 	 & \cg\	0.94 	 & \cg\ 0.94 \\
&  & \textbf{(18)}    & 	0.66 	 & 	0.93 	 & 	0.90 & \cg\	0.67 	 & \cg\	0.94     & \cg\ 0.89  \\
&  & \textbf{(42)} & 	0.66     & 	0.89 	 & 	0.89 & \cg\	0.65     & \cg\	0.92 	 & \cg\ 0.83 \\
&  & \textbf{(54)} & 	0.64 	 & 	0.86 	 & 	0.89 & \cg\	0.63 	 & \cg\	0.89 	 & \cg\ 0.79 \\
\hline
\hline
\multirow{4}{*}{\rotatebox[origin=c]{90}{\textbf{Zl}+\textbf{Dg}}} & \multirow{4}{*}{\rotatebox[origin=c]{90}{\textbf{Lg}+\textbf{Mr}}} 
   & \textbf{(8)}   & 	0.60 	 & 	0.92 	 & 	0.88 & \cg\	0.58 	 & \cg\	0.95 	 & \cg\	0.94 \\
&  & \textbf{(16)}   & 	0.64 	 & 	0.94 	 &  0.91 & \cg\	0.59 	 & \cg\	0.95	     & \cg\	0.91 \\
&  & \textbf{(40)} & 	0.65     & 	0.92 	 & 	0.90 & \cg\	0.61	     & \cg\	0.91 	 & \cg\	0.82 \\
&  & \textbf{(56)} & 	0.66 	 & 	0.92 	 & 	0.90 & \cg\	0.60 	 & \cg\	0.85 	 & \cg\	0.75 \\
\hline
\end{tabular}
\caption{Top player's winning rate against bottom player. 
}
\label{tab:final_results}
\end{table}

The winning rates in 1,000 matches of the algorithms in the G-Experiment are shown on the lefthand side of Table \ref{tab:final_results}. 
The first two columns of the table specify the kind and the total number of units controlled by each player. 
The remaining columns show the winning rate of the top algorithm, shown in the first row of the table, against the bottom algorithm. For example, in matches with 16 Zealots and 16 Dragoons (total of 32 units) GAB defeats PGS in 79\% of the matches.  The winning rates of the algorithms in the S-Experiment are shown on the righthand side of the table.

We observe in the third and fourth columns of the table that both GAB$_{\mc{P}}$ and GAB outperform PGS in all configurations tested. However, these results do not allow us to verify the effectiveness of asymmetric abstractions if analyzed individually. This is because both GAB$_{\mc{P}}$ and PGS search in uniformly abstracted spaces, and GAB's advantage over PGS could be due to the use of a different search strategy, and not due to the use of a different abstraction scheme. 
By comparing the numbers across the two columns we observe that GAB, which uses asymmetric abstractions, obtains substantially larger winning rates over PGS than GAB$_{\mc{P}}$, which uses uniform abstractions. For example, in matches with 8 Zealots and 8 Dragoons (16 units in total), GAB$_{\mc{P}}$'s winning rate is 0.66 against PGS, while GAB's is 0.82. 

The column GAB vs GAB$_{\mc{P}}$ of the table allows a direct comparison between uniform and  
asymmetric abstractions. 
GAB substantially outperforms GAB$_{\mc{P}}$ in almost all configurations, 
and its winning rate 
is never below 0.50. These results highlight the importance of 
focusing the search effort on a subset of units through an asymmetric abstraction. 


The results for the S-Experiment are similar to those of the G-Experiment: SAB has a higher winning rate over SSS than SAB$_{\mc{P}}$ and SAB substantially outperforms SAB$_{\mc{P}}$. 

SAB's winning rate over SSS is often larger than GAB's over PGS. For example, in combat scenarios with Zealots only (Zl), GAB's largest winning rate over PGS is 0.81 (with 32 units), which is smaller than the smallest winning of SAB over SSS (0.90 with 50 units). This is likely because SAB's first step (SSS) tends to finish much more quickly than GAB's (PGS). SSS searches for actions for types of units, while PGS searches for actions for units directly, and the number of types tend to be much smaller than the number of units \cite{lelis2017}. As a result, SAB performs its second step more often than GAB, which allows SAB to derive finer strategies to its unrestricted units in more decision points than GAB. 
In addition to executing the second step more often, SAB usually allows more computation time for its second step. SAB allowed 32.6 milliseconds of computation time on average for its second step, while GAB allowed 21.8 milliseconds on average for its second step.    

\subsubsection{Comparison of GAS with GAB and PGS}

We also ran experiments comparing GAS with GAB and PGS in  combat scenarios containing (i) Zl, (ii) Dg, and (iii) Zl and Dg; we used the same number of units shown in Table \ref{tab:final_results} for these scenarios. For each combat scenario we ran 1,000 matches. GAS won 55\% of the matches against PGS and only 14\% against GAB. These results highlight the significance of combining novel search algorithms with asymmetric abstractions. GAS is able to only marginally outperform PGS. By contrast, the two-step scheme used with GAB substantially outperforms both PGS and GAS. 




\section{Conclusions and Future Work}

We introduced GAB and SAB, two search algorithms that use an abstraction scheme we call asymmetric action abstraction. For not being too restrictive while filtering actions and for 
assigning finer strategies to a particular 
subset of units, 
GAB and SAB 
are able to substantially outperform the state-of-the-art search-based algorithms for RTS combats. 
%
%
As future work we intend to apply GAB and SAB to complete RTS games and to compare them to other search-based approaches designed to play complete games such as NaiveMCTS \cite{Ontanon13} and PuppetSearch \cite{barriga2017game}. 
We are also interested in developing algorithms that learn how to  select the unrestricted set of units in scenarios that appear in complete RTS games. 

\section{Appendix: Proofs}

The proof of Theorem \ref{theoremappendix:abs} hinges on the fact that one has access to more actions with $\Omega$ than with $\Phi$. This idea is formalized in Lemma \ref{lemma:subset}. 

\begin{lemma}
Let $\Phi$ be a uniform abstraction and $\Omega$ be an asymmetric abstraction, both defined with the same set of scripts $\mc{P}$. Also, let $\mc{A}'_i(s)$ be the set of actions available at state $s$ according to $\Phi$ and $\mc{A}''_i(s)$ the set of actions available at $s$ according to $\Omega$. 
$\mc{A}'_i(s) \subseteq \mc{A}''_i(s)$ for any $s$. 
\label{lemma:subset}
\end{lemma}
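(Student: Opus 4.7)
The plan is to argue pointwise: for every ready unit $u$, the per-unit move set used by $\Omega$ is a superset of the one used by $\Phi$, and then lift this componentwise containment to the Cartesian products that define $\mc{A}'_i(s)$ and $\mc{A}''_i(s)$.

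First I would recall that scripts always return legal moves, so for any state $s$ and unit $u$ we have
\[
\mc{M}(s, u, \mc{P}) = \{\bar{\sigma}(s, u) \mid \bar{\sigma} \in \mc{P}\} \subseteq \mc{M}(s, u).
\]
This simple inclusion is the only real content needed; I would state it explicitly and appeal to the definition of a script as a map from $(s,u)$ to a legal move for $u$.

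Next I would unpack the two abstractions on a common ready-unit set $\mc{U}^r_i$, split into the unrestricted units $\mc{U}'_i$ and the restricted units $\mc{U}^r_i \setminus \mc{U}'_i$. By definition,
\[
\mc{A}'_i(s) = \prod_{u \in \mc{U}^r_i} \mc{M}(s,u,\mc{P}),
\qquad
\mc{A}''_i(s) = \prod_{u \in \mc{U}^r_i \setminus \mc{U}'_i} \mc{M}(s,u,\mc{P}) \;\times\; \prod_{u' \in \mc{U}'_i} \mc{M}(s,u').
\]
On restricted units both abstractions use exactly the same per-unit set, while on unrestricted units the per-unit set for $\Omega$ is $\mc{M}(s,u')$, which by the first step contains $\mc{M}(s,u',\mc{P})$.

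To conclude I would use the elementary fact that if $X_k \subseteq Y_k$ for each coordinate $k$, then $\prod_k X_k \subseteq \prod_k Y_k$. Concretely, given any $a = (m_u)_{u \in \mc{U}^r_i} \in \mc{A}'_i(s)$, each entry $m_u$ lies in $\mc{M}(s,u,\mc{P})$; for restricted $u$ this is precisely the coordinate set of $\mc{A}''_i(s)$, and for unrestricted $u' \in \mc{U}'_i$ we have $m_{u'} \in \mc{M}(s,u',\mc{P}) \subseteq \mc{M}(s,u')$, so $a \in \mc{A}''_i(s)$. Hence $\mc{A}'_i(s) \subseteq \mc{A}''_i(s)$.

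I do not foresee a real obstacle here; the lemma is essentially a bookkeeping statement. The only subtlety is making sure the two Cartesian products are taken over the same ordered index set $\mc{U}^r_i$ so that componentwise containment cleanly implies containment of the products, and being careful that ``legal move returned by a script'' genuinely gives $\mc{M}(s,u,\mc{P}) \subseteq \mc{M}(s,u)$ (which is guaranteed by the definition of a script).
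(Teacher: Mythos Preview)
Your proposal is correct and follows essentially the same approach as the paper's proof: both unpack the definitions of $\mc{A}'_i(s)$ and $\mc{A}''_i(s)$ as Cartesian products, observe that $\mc{M}(s,u,\mc{P}) \subseteq \mc{M}(s,u)$ by the definition of a script, and conclude componentwise. Your write-up is slightly more careful (making the Cartesian-product monotonicity and the shared index set explicit), but there is no substantive difference.
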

\begin{proof}
By definition, the actions in $\mc{A}'_i(s)$ are generated by the Cartesian product of $\mc{M}(s, u, \mc{P})$ for all $u$ in $\mc{U}_i$ in $s$. The actions in $\mc{A}'_i(s)$ are generated by the Cartesian product of $\mc{M}(s, u, \mc{P})$ for all $u$ in $\mc{U}_i \setminus \mc{U}'_i$ and of $\mc{M}(s, u)$ for all $u$ in $\mc{U}'_i$. Since, also by definition, $\mc{M}(s, u, \mc{P}) \subseteq \mc{M}(s, u)$, we have that $\mc{A}'_i(s) \subseteq \mc{A}''_i(s)$. 
\end{proof}

Let $\Sigma'_i$ and $\Sigma''_i$ be the set of player $i$'s strategies whose supports contain only actions in $\mc{A}'_i$ and $\mc{A}''_i$, respectively. Also, let $\Sigma_{-i}$ be the set of all player $-i$'s strategies. Lemma \ref{lemma:subset} allows us to write the following corollary. 

\begin{corollary}
 For abstractions $\Phi$ and $\Omega$ defined from the same set of scripts $\mc{P}$ we have that $\Sigma'_i \subseteq \Sigma''_i$. 
\label{corollary:subset}
\end{corollary}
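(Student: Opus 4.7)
The plan is to derive the corollary essentially by pointwise inclusion, using Lemma \ref{lemma:subset} as the only nontrivial input. Fix an arbitrary $\sigma_i \in \Sigma'_i$. By the definition of $\Sigma'_i$, the support of $\sigma_i$, viewed as a map $\mc{S} \times \mc{A}_i \to [0,1]$, consists only of pairs $(s,a)$ with $a \in \mc{A}'_i(s)$. By Lemma \ref{lemma:subset}, $\mc{A}'_i(s) \subseteq \mc{A}''_i(s)$ for every state $s$, so every such $(s,a)$ in the support of $\sigma_i$ satisfies $a \in \mc{A}''_i(s)$. Hence the support of $\sigma_i$ is contained in $\mc{A}''_i$, and therefore $\sigma_i \in \Sigma''_i$. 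Since $\sigma_i$ was arbitrary, this gives $\Sigma'_i \subseteq \Sigma''_i$.

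The only subtlety worth being careful about is the normalization condition: a strategy is a function that must assign probabilities summing to $1$ over the legal actions at each state. Since $\sigma_i$ already assigns a valid probability distribution at every $s$ using only actions in $\mc{A}'_i(s) \subseteq \mc{A}''_i(s)$, the same function is also a valid distribution over $\mc{A}''_i(s)$ (with zero mass placed on the extra actions in $\mc{A}''_i(s) \setminus \mc{A}'_i(s)$). No renormalization or modification of $\sigma_i$ is needed, so the inclusion is genuine rather than merely up to some embedding.

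I do not expect any real obstacle here: the corollary is essentially a restatement of Lemma \ref{lemma:subset} at the level of mixed strategies rather than pure actions. The proof is a one-line unpacking of definitions once the action-set inclusion is in hand.
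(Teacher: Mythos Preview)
Your proposal is correct and matches the paper's approach: the paper states the corollary as an immediate consequence of Lemma~\ref{lemma:subset} without giving a separate proof, and your argument is precisely the natural unpacking of why the action-set inclusion lifts to mixed strategies.
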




\begin{theoremappendix}
Let $\Phi$ be a uniform abstraction and $\Omega$ be an asymmetric abstraction, both defined with the same set of scripts $\mc{P}$. For a finite match with start state $s$, let $V_i^{\Phi}(s)$ be the optimal value of the game computed by considering the space induced by $\Phi$; define $V_i^{\Omega}(s)$ analogously. We have that 
$V_i^{\Omega}(s) \ge V_i^{\Phi}(s)$.
\label{theoremappendix:abs}
\end{theoremappendix}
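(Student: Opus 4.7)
The plan is to derive $V_i^{\Omega}(s) \ge V_i^{\Phi}(s)$ by induction on the depth of the subgame rooted at $s$, using Corollary \ref{corollary:subset} to translate a containment of strategy spaces into a pointwise inequality on game values. Because the match is finite, zero-sum, and has simultaneous moves, the optimal value at each internal node is the Nash-equilibrium value of the finite matrix game whose pure strategies are the legal joint actions at $s$, with payoffs equal to the values of the successor states; such values are well-defined in mixed strategies and admit a backward-induction characterization on the game tree.

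For the base case, take any $s \in \mc{F}$. Both $V_i^{\Phi}(s)$ and $V_i^{\Omega}(s)$ equal $\mc{R}_i(s)$, so the inequality holds with equality. For the inductive step, fix $s \in \mc{D}$ and assume the inequality at every proper successor of $s$. Writing the value as a minimax over mixed strategies,
\[
V_i^{\Phi}(s) = \max_{\sigma_i \in \Delta(\mc{A}'_i(s))} \min_{\sigma_{-i} \in \Delta(\mc{A}_{-i}(s))} \sum_{a_i, a_{-i}} \sigma_i(a_i)\,\sigma_{-i}(a_{-i})\, V_i^{\Phi}(\mc{T}(s, a_i, a_{-i})),
\]
and analogously for $V_i^{\Omega}(s)$ with $\mc{A}''_i(s)$ in place of $\mc{A}'_i(s)$. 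Two facts then close the argument. First, by Corollary \ref{corollary:subset} applied at $s$, the feasible set of $\sigma_i$ enlarges when passing from $\Phi$ to $\Omega$, while $\mc{A}_{-i}(s)$ is untouched by either abstraction, so the inner minimization ranges over identical sets. Second, by the inductive hypothesis, $V_i^{\Omega}(\mc{T}(s, a_i, a_{-i})) \ge V_i^{\Phi}(\mc{T}(s, a_i, a_{-i}))$ for every $(a_i, a_{-i})$. Monotonicity of $\max$ in both the objective and the feasible set of $\sigma_i$ gives $V_i^{\Omega}(s) \ge V_i^{\Phi}(s)$.

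The main obstacle is not computational but bookkeeping: one must be explicit that Lemma \ref{lemma:subset} applies at \emph{every} state of the game tree (not only at $s$), so that the inductive hypothesis is available at successors, and one must observe that neither $\Phi$ nor $\Omega$ restricts player $-i$'s moves, so that the inner $\min$ ranges over the same simplex in both games. A minor subtlety is that moves are durative; however, since the match is finite and the tree is bounded, the recursion over successor states is well-founded and backward induction is justified.
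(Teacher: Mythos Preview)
Your proposal is correct and follows essentially the same approach as the paper: backward induction on the game tree, with the base case at terminal states and the inductive step combining Corollary~\ref{corollary:subset} (enlarged feasible set for $\sigma_i$) with the inductive hypothesis on successor values. Your write-up is slightly more explicit than the paper's in noting that $\mc{A}_{-i}(s)$ is unchanged by either abstraction and in addressing well-foundedness under durative moves, but the core argument is identical.
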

\begin{proof}
We prove the theorem by induction  
on the level of the game tree. The base case is given by leaf nodes $s_l$. Since $V_i^{\Omega}(s_l) = V_i^{\Phi}(s_l) = \mc{R}_i(s_l)$, the theorem holds. 
The inductive hypothesis is that $V_i^{\Omega}(s') \ge V_i^{\Phi}(s')$ for any state $s'$ at level $j+1$ of the tree. For any state $s$ at level $j$ we have that,
\begin{equation*}
\begin{aligned}
V_i^{\Omega}(s) =  \max_{\sigma_i \in \Sigma''_i} \min_{\sigma_{-i} \in \Sigma_{-i}} & \sum_{a_i \in \mc{A}(s)} \sum_{a_{-i} \in \mc{A}_{-i}(s)} \sigma_i(s, a_i) \cdot \\
& \sigma_{-i}(s, a_{-i}) \cdot V_i^{\Omega}(\mc{T}(s, a_i, a_{-i}))
\end{aligned}
\end{equation*}
\begin{equation*}
\begin{aligned}
 \hspace{0.45in} \ge  \max_{\sigma_i \in \Sigma'_i} & \min_{\sigma_{-i} \in \Sigma_{-i}} \sum_{a_i \in \mc{A}(s)} \sum_{a_{-i} \in \mc{A}_{-i}(s)} \sigma_i(s, a_i) \cdot \\
& \sigma_{-i}(s, a_{-i}) \cdot V_i^{\Phi}(\mc{T}(s, a_i, a_{-i})) = V_i^{\Phi}(s) \,.
\end{aligned}
\end{equation*}
The first equality is the definition of the value of a zero-sum simultaneous move game. 
The inequality is because 
$\Sigma'_i \subseteq \Sigma''_i$ (Corollary \ref{corollary:subset}) and $V_i^{\Omega}(\mc{T}(s, a_i, a_{-i})) \ge V_i^{\Phi}(\mc{T}(s, a_i, a_{-i}))$, as $\mc{T}(s, a_i, a_{-i})$ returns a state at level $j+1$ of the tree (inductive hypothesis). The inequality also holds if the transition $\mc{T}(s, a_i, a_{-i})$ returns a terminal state $z$ at level $j+1$ as $V^{\Omega}_i(z) = V^{\Phi}_i(z) = \mc{R}_i(z)$. The last equality is analogous to the first one. 
\end{proof}

\section{Acknowledgements}

The authors gratefully thank FAPEMIG, CNPq, and CAPES for financial support, the anonymous reviewers for several great  suggestions, and Rob Holte for fruitful discussions and suggestions on an earlier draft of this paper.  

\bibliography{aaai}
\bibliographystyle{aaai}

\end{document}